\let\c@author\relax
\newtheorem{thm}{Theorem}
\newtheorem*{thm*}{Theorem}
\newtheorem{rem}[thm]{Remark}
\newtheorem{defi}[thm]{Definition}
\renewcommand{\P}{\mathbb{P}}	
\newcommand{\esp}{\mathbb{E}}
\newcommand{\E}{\esp}	
\newcommand{\ZZ}{\mathbf{Z}}
\newcommand{\cL}{\mathcal{L}}
\newcommand{\R}{\mathbb R}
\newcommand{\X}{\mathcal{X}}
\newcommand{\inde}{{\perp\!\!\!\perp}}
\newcommand{\ra}{\rightarrow}
\newcommand{\DI}{\mathsf{DI}}
\newcommand{\PC}{\mathrm{PC}}
\newcommand{\ud}{\mathrm{d}}
\renewcommand{\phi}{\varphi}
\newcommand{\GG}{\mathcal{G}}
\newcommand{\FF}{\mathcal{F}}
\newcommand{\RR}{\mathcal{R}}
\newcommand{\EE}{\mathcal{E}}\newcommand{\cE}{\mathcal{E}}
\newcommand{\cP}{\mathsf{P}}
\renewcommand{\cE}{\EE}
\definecolor{pink}{cmyk}{0, 1, 0, 0}
\begin{document}
\begin{frontmatter}
\title{Projection to Fairness\\ in Statistical Learning}
\runtitle{Fairnness in Statistical Learning}

		\author{ 
			\fnms{Thibaut}
			\snm{Le Gouic}\thanksref{t1}\ead[label=tlg]{thibaut.le\_gouic@math.cnrs.fr}
			\fnms{Jean-Michel} \snm{Loubes}\thanksref{t3}\ead[label=jml]{loubes@math.univ-toulouse.fr}
			\and
			\fnms{Philippe} \snm{Rigollet}\thanksref{t2}\ead[label=rigollet]{rigollet@math.mit.edu}
		}
		
		\affiliation{MIT and University Toulouse 3}
 		\thankstext{t1}{Department of Mathematics, Massachusetts Institute of Technology and Aix Marseille Univ, CNRS, Centrale Marseille, I2M, Marseille, France}
		\thankstext{t3}{Department of Mathematics, LIDS, Statistics and Data Science Center, Massachusetts Institute of Technology.
		}
		\thankstext{t2}{Institut de Math\'ematiques de Toulouse, University Toulouse 3.}

		\address{{Thibaut Le Gouic}\\
			{Department of Mathematics} \\
			{Massachusetts Institute of Technology}\\
			{77 Massachusetts Avenue,}\\
			{Cambridge, MA 02139-4307, USA}\\
			\printead{tlg}
		}

				\address{{Jean-Michel Loubes}\\
			{Institut de Math\'ematiques de Toulouse} \\
			{University Toulouse 3}\\
			{Toulouse, France}\\
			\printead{jml}
		}

		\address{{Philippe Rigollet}\\
			{Department of Mathematics} \\
			{Massachusetts Institute of Technology}\\
			{77 Massachusetts Avenue,}\\
			{Cambridge, MA 02139-4307, USA}\\
			\printead{rigollet}
		}

\begin{abstract}
In the context of regression, we consider the fundamental question of making an estimator fair while preserving its prediction accuracy as much as possible. To that end, we define its projection to fairness as its closest fair estimator in a sense that reflects prediction accuracy.
Our methodology leverages tools from optimal transport to construct efficiently the projection to fairness of any given estimator as a simple post-processing step. Moreover, our approach precisely quantifies the cost of fairness, measured in terms of prediction accuracy.
    \end{abstract}
\end{frontmatter}


\section{Introduction}

Machine learning has become pervasive in most aspects of modern society, leading to breakneck advances in data-driven decision making. This rapid development also comes with pressing moral and legal issues, especially in sensitive areas such as justice and human resources that are tightly knit into the very fabric of modern societies. We refer to \cite{Dresseleaao5580,chouldechova2017fair, barocas2016big,kim2016data} and references therein for concrete examples of unfair treatment in automatic decision making. Unfairness in machine learning algorithms may come from many sources ranging from historically biased training data to decision rules that favor majority groups~\cite{friedler2019comparative,feldman2015certifying,besse2020survey}.
These examples raise the fundamental question of \emph{fairness} in machine learning: can we design efficient machine learning that provably guarantee a fair treatment for all? \\
Actually, fairness is defined with respect to a so-called {\it sensitive variable} such as gender or ethnic origin or more generally which refers to a characteristic  that \emph{should not} play a decisive role in the decision making process for legal, ethical or practical reasons. Many definitions of fairness have arisen in machine learning to quantify the relationship between the behavior of an algorithm and the sensitive variable. Hence  a \emph{fair} algorithm is designed to remove partially or totally the influence of $S$ in its outcome, which may alter its behaviour. Note that because of correlations, it is possible for an algorithm to be unfair with respect to a variable $S$ that is not even part of its input~\cite{besse2020survey,reuters}.

The widespread adoption of fair algorithms is primarily hindered by an inherent trade-off between fairness and accuracy. Indeed, to become fair, an algorithm must sacrifice prediction accuracy, a measure that has been the gold standard to compare various methods since the dawn of machine learning. As a result, it is not only important to develop mechanisms that achieve fairness but also to quantify and minimize the \emph{cost of fairness} in terms of prediction accuracy. 

In this work, we use tools from the theory of optimal transport to study fairness in the central statistical problem of  {regression} by developing a \emph{projection to fairness} and precisely quantifying the \emph{cost of fairness}. Recall that the goal of regression is to predict a random variable $Y \in \R$ given a pair $(X,S)$, where $S$ denotes the sensitive variables and $X$ denotes the remaining  non-sensitive variables. A \emph{regressor} is a function $(X,S)\mapsto g(X,S) \in \R$ and we measure its performance using the quadratic risk:
\[
\RR(g):=\esp\big[|Y-g(X,S)|^2\big].
\] 
Given a class $\GG$ of regressors, we denote by $\RR(\GG)$ the best possible risk over this class:
\begin{equation}
\label{eq:riskbayes}    
\RR(\GG):=\inf_{g\in\GG}\RR(g).
\end{equation}
When $\GG=\FF$ is the set of all measurable functions, it is well known that the infimum in~\eqref{eq:riskbayes} is achieved by the \emph{regression function} $\eta(x,s):=\esp[Y|X=x,S=s]$. In this context, it is also called the \emph{Bayes regressor}. To emphasize the special role of the variable $S$, we write $\eta_s(x)=\eta(x,s)$ in the rest of the paper. In this work, we are concerned with the \emph{excess-risk} $\EE(\GG)=\RR(\GG)-\RR(\FF)\ge 0$ of a class $\GG$ constrained to contain only \emph{fair regressors}. In other words, the excess risk $\EE(\GG)$ quantifies the \emph{cost of fairness} in regression.

\medskip

Many definitions of fairness have been considered in the literature, originally for classification problems. In the context of regression, the fairness of a candidate regressor $g$ is characterized by the statistical dependence between  $g(X,S)$ and $S$. This approach has led to methodological choices that aim at quantifying this notion of dependence or correlation in order to remove it as a way to obtain fair regressors~\cite{fitzsimons2019general, Agarwal2019FairRQ, chzhenPlug-in2020}.

In this work we focus on a mild generalization of {\it demographic parity} (DP) as a notion of fairness; see Definition~\ref{def:PC}. A regressor $g$ is said to be \emph{DP-fair} if $g(X,S)$ is independent of the variable $S$. As a result, the decision of a DP-fair regressor is not impacted by the sensitive variable $S$. When $Y \in \{0,1\}$ (binary classification) and $S \in \{0,1\}$,  DP boils down to the following condition:
\[
\P(g(X,S)=1|S=1)=\P(g(X,S)=1|S=0).
\]
This condition may be relaxed by controlling the ratio of the two terms above, known as the \emph{disparate impact} of $g$:
\begin{equation}
\label{eq:DI}
\mathsf{DI}(g):= \frac{\P(g(X,S)=1|S=1)}{\P(g(X,S)=1|S=0)}.
\end{equation}
We refer to \cite{feldman2015certifying} and references therein for a description of this measure which has become popular due to its interpretation in terms of legislation.  

\medskip

Once unfairness is measured, it needs to be corrected. Methods aiming at reducing unfairness may be divided into three main categories : pre-processing the observations to remove the influence of the sensitive variable, constraining the optimization problem~\eqref{eq:riskbayes} or post-processing the outcome of the algorithm; see \cite{Oneto2020,2015arXiv150705259B,pmlr-v81-menon18a} and references therein for a review of these methods. Many of the existing methods hinge around the following task: comparing the distributions of an object (observations, regressor or scores of a regressor) for the different values that the sensitive variable $S$ can take. In other words, this task consists in comparing   different conditional distributions given $S$. In particular, if two such conditional distributions are close with respect to   a well chosen distance, then the sensitive variable plays little role in that object.
Of course, the choice of the distance between said conditional distributions is key and Wasserstein distances from the theory of optimal transport have played a preponderant role in this context. In fact, many works at the interface between optimal transport and fairness have been conducted in this direction, including methodological and theoretical treatments \cite{feldman2015certifying,Hacker2017ACF,del2018obtaining,del2019central}. Particularly relevant to the present paper is the work of  \cite{jiang2019wasserstein} on fair classification, where  a bound on the classification risk which involves the 1-Wasserstein distance is established. In turn, this bound is minimized by a certain $1$-\emph{Wasserstein barycenter}.
This result enables to post-process the scores of a classifier to gain fairness.

\subsection*{Our contributions.} In this work, we completely characterize the cost of DP-fairness for regression.  More specifically, we establish matching upper and lower bounds on the excess risk of a DP-fair regressor.  Our upper bound is achieved via a $2$-Wasserstein barycenter that can be computed efficiently using a multimarginal formulation. Importantly, while our optimal upper bound is achieved by post-processing a specific optimal estimator, the Bayes regressor, our methodology can be used to post-process \emph{any} estimator. In fact, our lower bound also extends to more general classes of fair algorithms that are characterized by their conditional distribution given $S$ and that encompass DP-fair regressors as a canonical example; see Definition~\ref{def:PC}.

The rest of the paper is organized as follows.
We introduce notation and definitions of fairness we consider in Section~\ref{sec:fair}.
In Section~\ref{s:bound}, we establish a lower bound on the excess risk of DP fair regressors. 
Section~\ref{s:build} focuses on building an optimal DP-fair regressor.
Section~\ref{s:estimation} is devoted to the estimation of our optimal regressor.
Proofs are gathered in Appendix~\ref{s:append}.

\section{Fair regressors}
\label{sec:fair}

Fix a positive integer $k$ and let $(Y,X,S)\sim \P$ be a  random triple taking values in $\R \times \R\times [k]$, where $\X$ is an abstract topological space and $[k]:=\{1, \ldots, k\}$; write $\pi_s=\P(S=s),$ for $s \in [k]$. Finally, let $\GG$ be a class of regressors $g: \X\times [k] \to \R$.

To set a benchmark, consider first the case where $\GG=\FF$, the set of all measurable functions from $\X\times [k]$ to $\R$. In this case, the optimal risk (a.k.a. Bayesian risk), is defined as 
\[
\RR^\star:=\RR(\FF)=\min_{g\in\FF}\esp\big[|Y-g(X,S)|^2\big],
\]
and the minimum is achieved for the Bayes regressor $g(X,S)=\eta_S(X)=\E[Y|X,S]$. The \emph{excess-risk} of the class $\GG$ of regressors is then given by 
\begin{equation}
\label{eq:excessr}
\EE(\GG):=\RR(\GG) - \RR^\star.
\end{equation}
To model fairness, we consider the following class of regressors that encompasses several existing fair regressors (and classifiers). In the following definitions, we allow for \emph{randomized} regressors  $g:\X\times[k] \to \R$ equipped with an exogenous source of randomness that is independent of $(Y, X,S)$.

\begin{defi}[Fairness profile]
\label{def:FP}
The \emph{fairness profile} of a (possibly random) regressor $g:\X\times[k] \to \R$ is the vector $\cP(g):=\big(\cL(g(X,S)|S=s)\big)_{\, s \in [k]}$ of conditional distributions of $g(X,S)$ given $S=s$ for all values of $S$. 
The map $g \mapsto \cP(g)$ is called the \emph{profiling} map.
\end{defi}

We focus on classes of regressors that are complete under profiling. 
\begin{defi}[PC class]
\label{def:PC}
A class $\GG\subset \FF$ of (possibly random) regressors $g:\X\times[k] \to \R$ is \emph{profile complete} for the model $(X,Y,S)$ (abbreviated $\PC(X,Y,S)$ or simply $\PC$) if any measurable function $h \in \FF$  that shares the same fairness profile with some $g \in \GG$ is also itself in $\GG$. 
\end{defi}


\noindent PC classes encompass the following notions that are used in the fairness literature.
\begin{itemize}
\item {\bf Demographic parity}.
The set $\GG_\inde$ of DP-fair regressors $g$ such that $g(X,S)$ is independent of $S$ is a canonical example of a PC class. Indeed,   $g(X,S)$ is independent of $S$ if and only if the conditional distribution of $g(X,S)$ given $S$ is equal to the distribution of $g(X,S)$.
In particular, the profile $\cP(g)$ of $g \in \GG_\inde$ is a constant vector $(\cL(g(X,S)))_{s\in[k]}$ and if $h$ has profile $\cP(g)$, then $h(X,S)$ is independent of $S$ so that $h \in \GG_\inde$.

\item {\bf Bounded conditional variance}. Demographic parity requires that the predictions of the algorithm are the same for each of the $k$ values of $S$. In particular, $\esp[g(X,S)|S]=\esp[g(X,S)]$ almost surely in $S$ so that the conditional distribution of $g(X,S)$ given $S$ has zero variance. This condition may be weakened by instead requiring that  $\esp[g(X,S)|S]$ has small variance, which warrants a similar behavior for all values of $S$; see \cite{2019arXiv190108665W} where variability of the conditional loss of the regressor is used as a measure of fairness.
To that end, fix $\alpha>0$ and let $\GG_\alpha$ denote the class of functions $g$ such that ${\rm Var}(\esp[g(X,S)|S]) \leq \alpha$. This class is a clearly PC class since it is only characterized by the fairness profile of $g$.

\item {\bf Fixed Disparate impact}.
If $Y \in \{0,1\}$ (binary classification), the set $\GG_\alpha$ of classifiers $g$ with disparate impact bounded by $\alpha >0$, i.e. $\DI(g)\le\alpha$, is also a PC class.
To see this, note that the condition $\DI(g)=\alpha$  depends only on the fairness profile of $g$.
\end{itemize}


\section{A lower bound on the cost of fairness}
 \label{s:bound}
 
 Our results use the notion of optimal transport between probabilities. We briefly recall here the notion of Wasserstein distance and refer the reader to the manuscripts~\cite{villani_topics_2003,villani_optimal_2008, santambrogio_optimal_2015} for a more comprehensive treatment.

Let $P$ and $Q$ be two probability measures on $\R$ and let $\Pi(P,Q)$ denote the set of joint probability measures on $ \mathbb{R} \times \mathbb{R}$ with marginals given by $P$ and $Q$. The squared Wasserstein distance between
$P$ and $Q$ is defined as
\[
W_2^2(P,Q):=\min_{\pi \in\Pi(P,Q)} \int |x-y|^2 d\pi(x,y)=\int_0^1|F^{-1}(t)-G^{-1}(t)|^2\ud t\,,
\]
where $F^{-1}$ and $G^{-1}$ denote the quantile function (a.k.a inverse CDF) of $P$ and $Q$ respectively.

Recall that when $\GG$ is the set $\FF$ of all measurable functions, the optimal regressor is the Bayes regressor $ \eta_S(X)=\esp(Y|X,S)$. Let $\mu_s$ denote the conditional distribution of $\eta_S(X)$ given $S=s$.
Fairness conditions tend to impose that the distribution of $g(X,S)$ conditionally to $S=s$ are close for different values of $s$. In other words, $g$ is fair if its fairness profile exhibits low variability across its entries. Hence it is natural that the variability of the random probability measure $\mu_S$ plays en essential role to quantify the cost of fairness.

We make this intuition precise in the following theorem. It also highlights the central role of the $2$-Wasserstein distance in quantifying the randomness of $\eta_S$ as a random probability measure.
\begin{thm} \label{th:bound}
Fix a class $\GG$ of regressors and for any $g$ in $\GG$, let $\nu_S(g)$ denote the conditional distribution of $g(X,S)$ given $S$.   Then, the excess-risk of $\GG$ is lower bounded by
\begin{equation} \label{eq:bound}
\EE(\GG) \geq \inf_{g \in \GG}\sum_{s=1}^{k}\pi_s W_2^2(\mu_s,\nu_s(g)).
\end{equation}
Moreover, if $\GG$ is $\PC$ and $\mu_S$ has density w.r.t. the Lebesgue measure almost surely in $S$, then \eqref{eq:bound} becomes an equality
\begin{equation} \label{eq:equal}
\EE(\GG) = \inf_{g \in \GG}\sum_{s=1}^{k}\pi_s W_2^2(\mu_s,\nu_s(g)).
\end{equation}
\end{thm}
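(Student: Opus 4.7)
The plan is to combine the bias--variance decomposition of the quadratic risk with the observation that, conditionally on $S=s$, the pair $(\eta_s(X), g(X,s))$ is automatically a coupling of $(\mu_s, \nu_s(g))$. The lower bound is then immediate from the definition of $W_2^2$, and the matching upper bound follows by constructing a regressor that realizes the one-dimensional optimal transport plan for every $s$ simultaneously.

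\textbf{Step 1 (Lower bound, no PC or density assumption).} For any $g \in \FF$, orthogonality of the residual $Y - \eta_S(X)$ to every function of $(X,S)$ gives
\[
\EE(g) = \E\bigl[(\eta_S(X) - g(X,S))^2\bigr] = \sum_{s=1}^{k} \pi_s\, \E\bigl[(\eta_s(X) - g(X,s))^2 \,\big|\, S=s\bigr].
\]
Under $\P(\,\cdot \mid S=s)$ the pair $(\eta_s(X), g(X,s))$ has marginals $\mu_s$ and $\nu_s(g)$, so its joint law belongs to $\Pi(\mu_s, \nu_s(g))$. The definition of $W_2^2$ as an infimum over couplings yields $\E\bigl[(\eta_s(X)-g(X,s))^2\,\big|\,S=s\bigr]\ge W_2^2(\mu_s, \nu_s(g))$ for every $s$, and summing then taking an infimum over $g \in \GG$ proves \eqref{eq:bound}.

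\textbf{Step 2 (Matching upper bound via PC and optimal transport).} Assume $\GG$ is PC and each $\mu_s$ is absolutely continuous. Given any $g \in \GG$, I would exhibit $h \in \GG$ that turns Step~1 into an equality. Since $\mu_s$ has a density, the one-dimensional monotone rearrangement $T_s := G_s^{-1}\circ F_s$, where $F_s$ and $G_s$ are the CDFs of $\mu_s$ and $\nu_s(g)$, is a measurable map pushing $\mu_s$ forward to $\nu_s(g)$ and realizing $\int (z-T_s(z))^2\,\ud\mu_s(z) = W_2^2(\mu_s, \nu_s(g))$. Define
\[
h(x,s) := T_s(\eta_s(x)).
\]
Conditionally on $S=s$, $h(X,s) = T_s(\eta_s(X))$ has law $\nu_s(g)$, so $\cP(h) = \cP(g)$, and the PC hypothesis forces $h \in \GG$. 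The conditional identity of Step~1 applied to $h$ yields $\EE(h) = \sum_s \pi_s W_2^2(\mu_s, \nu_s(g))$, whence $\EE(\GG) \le \sum_s \pi_s W_2^2(\mu_s, \nu_s(g))$; taking the infimum over $g \in \GG$ on the right matches the lower bound and delivers \eqref{eq:equal}.

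\textbf{Main obstacle.} The lower bound is essentially tautological once the correct coupling is spotted; the real content is Step~2, where one must produce a \emph{single} measurable function whose conditional laws match $\nu_s(g)$ and whose joint behavior with $\eta_S(X)$ attains the Wasserstein optimum simultaneously for every $s$. The absolute-continuity hypothesis on $\mu_s$ is precisely what allows the optimal coupling to be realized as a deterministic map $T_s$ so that $h$ can be written as a pointwise post-processing of $\eta_S(X)$; without it the optimal plan need not admit such a functional representation, and the construction of $h$ would require adding independent exogenous randomness.
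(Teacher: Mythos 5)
Your proposal is correct and follows essentially the same route as the paper: the Pythagoras/orthogonality identity reduces the excess risk to $\esp|g(X,S)-\eta_S(X)|^2$, the conditional law of $(\eta_s(X),g(X,s))$ given $S=s$ is recognized as a coupling of $(\mu_s,\nu_s(g))$ for the lower bound, and the equality under the PC and density assumptions is obtained by pushing $\eta_s(X)$ through the optimal (monotone) transport map $T_s$ onto $\nu_s(g)$ and invoking profile completeness to place $h(x,s)=T_s(\eta_s(x))$ back in $\GG$. The only difference is that you make the map explicit as $G_s^{-1}\circ F_s$, which the paper leaves as an abstract optimal transport map.
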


Theorem~\ref{th:bound} precisely quantifies the cost of fairness for PC classes using the squared Wasserstein distance. More specifically, the cost of fairness is precisely given by the minimum over $g \in \GG$ of the variance of a random entry of the fairness profile of $g$. 
This observation has significant practical consequences: fair regressors can  be found by minimizing \eqref{eq:equal}.

In the next section we specialize this bound to the case of DP fairness where the right-hand side of~\eqref{eq:equal} translates into a Wasserstein barycenter.


This remark enables to build a new fair classification for regression type problems in Section~\ref{s:build}.

\begin{rem}
For the quadratic regression (i.e. quadratic loss for the risk), the excess risk and the average squared distance between a fair regressor $g(X,S)$ and the Bayes regressor $\eta_S(X)$ coincide (see equation \eqref{eq:risk-est}).
This is the only part where we use the particular choice of quadratic regression.
If we were interested in the quantity $\esp[d(g(X,S),\eta_S(X))^2]$ to quantify the performance of a fair regressor, for some metric $d$, then similar results hold for any kind of regression.
\end{rem}
\begin{rem}
Note that Theorem~\ref{th:bound} provides also a bound for the excess risk in the classification case. Let $Y \in \{0,1\}$ and  $\mathcal{G}$ be a subset of functions with also binary values $\{0,1\}$.
Note that
\[
\esp \big[|Y-g(X,S)|^2\big] = \P(Y \neq g(X,S))
\]
is the classification risk, while $\eta_S(X)=\esp[Y|S,X]=\P(Y=1|X,S)$.
Hence \eqref{eq:equal} provides a control over 
\[
\P(Y \neq g(X,S))- \esp \big[|Y-\eta_S(X)|^2 \big]
\]
which differs from the excess risk

\[
\cE(\{g\})=\P(Y \neq g(X,S))- \inf_g \P(Y \neq g(X,S))
\]
since $\eta_S(X) \notin \{0,1\}$. \\
\indent Yet this bound can still be used when trying to understand the prediction of scores used in classification before a threshold is applied.
\end{rem}

\section{Projection to DP-fairness} \label{s:build}
In this section, we specialize the results of the previous section to the case of DP-fairness. In this case, we show that a regressor $g$ may be \emph{projected to fairness} using Wasserstein barycenters.

\subsection{Achieving fairness using Wasserstein barycenters}
Recall that  $\GG_\inde$ denotes the class of DP-fair regressors $g$ such that $g(X,S)$ is independent of $S$. This implies that $s\mapsto \nu_s(g)$ is constant equal to $\nu(g)$, or in other words that the fairness profile of of $g$ is the constant vector $\cP(g)=(\nu(g), \ldots, \nu(g))$ for all $g \in \GG_\inde$. Moreover, since $\GG_\inde$ is PC, Theorem~\ref{th:bound} yields
\begin{equation}\label{eq:optisbary}
\EE(\GG_\inde) = \inf_{g \in \GG_\inde} \sum_{s=1}^k \pi_s W_2^2(\mu_s,\nu(g)),
\end{equation}

Note that, as $g$ ranges through $\GG_\inde$, the measure $\nu(g)$, which is the distribution of $g(X,S)$ ranges through all possible measures over $\R$ \emph{if we allow for randomized regressors $g$}. As a result, we have
\begin{equation}
     \label{eq:levrai}
      \inf_{g \in \GG_\inde} \sum_{s=1}^k \pi_s W_2^2(\mu_s,\nu(g))= \inf_{\nu} \sum_{s=1}^k \pi_s W_2^2(\mu_s,\nu),
\end{equation}
where the infimum is taken over all probability measures on $\R$.

We recognize the problem of finding the \emph{Wasserstein barycenter} of the distribution $P_{\mu_S}=\sum_{s=1}^k \pi_s \delta_{\mu_s}$ generating the mixture of conditional distributions $\mu_s$ with weights $\pi_s$.
When it exists, a minimizer of \eqref{eq:levrai} is called a \emph{barycenter} of the empirical distribution $P_{\mu_S}$ and  denoted~$\bar{\nu}$.

Wasserstein barycenters were first introduced in \cite{agueh_barycenters_2011} and they have since generated sustained research activity due to their wide applicability. Several algorithms to compute Wasserstein barycenters have been developed \cite{cuturiSinkhornDistancesLightspeed2013,cuturiFastComputationWasserstein2014,solomonConvolutionalWassersteinDistances2015} and statistical guarantees of its solutions have been studied in \cite{ahidar-coutrixConvergenceRatesEmpirical2019,chewiGradientDescentAlgorithms2020a,legouicFastConvergenceEmpirical2019,boissard2015distribution,le_gouic_existence_2017,ALVAREZESTEBAN2016744,kroshninStatisticalInferenceBuresWasserstein2019}. 

As pointed out in  \cite[Sec. 4]{agueh_barycenters_2011},  the barycenter problem is equivalent to the following multi-marginal problem.
Let $\Gamma(\mu_1,\dots,\mu_k)$ denote the set of joint probability measures on $\R^k$ with marginals given by $\mu_1,\dots,\mu_k$. The multi-marginal problem consists in the following minimization problem 
\begin{equation}\label{eq:multimarginal}
\min_{\gamma\in\Gamma(\mu_1,\dots,\mu_k)}\left\{ \sum_{s=1}^k\pi_s\int |z_s-b(\mathbf{y})|^2\ud\gamma(z_1,\dots,z_k)\right\} ,
\end{equation}
where $\mathbf{z} = (z_1,\dots,z_k) \in \R^k$  and $b: \R^k\to\R$ is the barycenter map defined by
\[
b(\mathbf{z}):=\sum_{s=1}^k\pi_s z_{s}.
\]

It follows from \cite[Prop. 4.2]{agueh_barycenters_2011} that a solution  $\gamma_\star$ of \eqref{eq:multimarginal} always exist.  In fact $\gamma_\star$ 
of the random vector $(F_1^{-1}(U),\cdots,F_k^{-1}(U))$ where $U \sim \mathsf{Unif}([0,1])$ and, for each $s \in [k]$,  $F_s^{-1}$ denotes the quantile fuction of $\mu_s$.

Moreover, the barycenter $\bar \nu$ of $P_{\mu_S}$ is the pushforward measure defined by $b_\#\gamma_\star := \gamma_\star \circ  b^{-1}$. Hence the following holds  
\begin{equation}
\inf_\nu  \sum_{s=1}^k\pi_s W_2^2(\nu,\mu_s)=\sum_{s=1}^k \pi_s\int|y_s-b(\mathbf{z})|^2\ud\gamma_{\star}(z_1,\dots,z_k).\label{eq:mingamma}
\end{equation}

Thus, if $\ZZ=(Z_1,\dots,Z_k) \sim \gamma_\star$, then  $b(\ZZ) \sim \bar \nu$, which is the distribution of the optimal fair regressor as shown in \eqref{eq:optisbary} according to Theorem~\ref{th:bound}. This multimarginal representation allows us  to exhibit an optimal DP-fair regressor.

\subsection{Projection to fairness of the Bayes regressor}

Assume that for some $s\in[k]$, the distribution $\mu_s$ admits a density w.r.t. Lebesgue measure on $\mathbb{R}^d$.
In this case, Theorem 4.1 in \cite{agueh_barycenters_2011} ensures that there exist measurable maps $T_s:\R\ra\R$ and $T^t:\R\ra\R$ which are optimal transport maps, pushing $\mu_s$ towards $\bar{\nu}$ and $\bar{\nu}$ towards $\mu_t$ respectively. 

Hence  if a random variable $Z_s$ has distribution $\mu_s$, then  $T_s(Z_s)$ has distribution $\bar{\nu}$ and for $s\in[k]$ such that $\mu_s$ has density w.r.t. the Lebesgue measure,
\[
\gamma_\star=(T^1\circ T_s,\dots,T^k\circ T_s)_\#\mu_s.
\]
Recall that $\eta_s(X) \sim \mu_s$ and assume hereafter that $\mu_s$ admits a density with respect to the Lebesgue measure for all $s \in [k]$. Since $b_\#\gamma_\star=\bar{\nu}$,   setting $T_s^t:=T^t \circ T_s$, we have that, for any $s\in[k]$ such that $\mu_s$ has density w.r.t. Lebesgue measure, 
\[
b\big(T_s^1( \eta_s(X)), \ldots,T_s^k( \eta_s(X)) \big)=\sum_{t=1}^k \pi_t T_s^t( \eta_s(X))\sim\bar{\nu}.
\]
In fact, since for any $s \in [k]$, it holds $F_s(\eta_s(X))=U \sim \mathsf{Unif}([0,1])$ and   $(F_1^{-1}(U),\cdots,F_k^{-1}(U)) \sim \gamma_\star$, we may also write
\[
\sum_{t=1}^k \pi_t F_t^{-1}\circ F_s( \eta_s(X))\sim\bar{\nu}
\]

The above construction yields a projection to fairness of the Bayes regressor.

of a random variable that has distribution $\bar \nu$ readily yields the following optimal DP-fair regressors.
\begin{thm}
\label{th:optpred}
For all $s \in [k]$, assume that the distribution $\mu_s$ of $\eta_s(X)$ admits a density w.r.t. the Lebesgue measure on $\R$ for all $s \in [k]$ and denote by $F_s$ its CDF. Then the regressor $g_\star$ defined by
\[
g_\star(x,s):=\sum_{t=1}^k\pi_t T_s^t(\eta_s(x))=\sum_{t=1}^k \pi_t F_t^{-1}\circ F_s( \eta_s(x))\sim\bar{\nu}\,, \qquad x \in \X, \ s \in [k]
\]
is optimal among DP-fair estimators: $\RR(g_\star) = \RR(\GG)$. Moreover, the cost of DP-fairness is given by
$$
\EE(\GG_\inde)=  \sum_{s=1}^k\pi_s W_2^2(\nu(g_\star),\mu_s)
$$
where $g_\star(X,S)\sim \nu(g_\star)$ is independent of $S$.
\end{thm}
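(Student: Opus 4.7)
The plan is to verify two facts separately: (i) $g_\star \in \GG_\inde$, and (ii) the excess risk of $g_\star$ matches the lower bound provided by Theorem~\ref{th:bound} specialized to DP-fairness. The entire argument then reduces to classical one-dimensional optimal transport identities and the bias--variance decomposition of the quadratic risk.

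First, I would establish fairness. Since $\mu_s$ has a density for every $s$, the random variable $U_s := F_s(\eta_s(X))$ is uniform on $[0,1]$ conditionally on $S=s$. Using the classical 1D identity for barycenters (cf.\ \cite{agueh_barycenters_2011}), the quantile function of $\bar\nu$ is $\bar F^{-1} = \sum_{t=1}^k \pi_t F_t^{-1}$, so the defining formula of $g_\star$ reads
\[
g_\star(X,S) \;=\; \bar F^{-1}(U_S).
\]
Conditioning on $S=s$, the law of $g_\star(X,S)$ is $\bar F^{-1}_{\#}\mathsf{Unif}([0,1]) = \bar\nu$, which does not depend on $s$. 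Hence $g_\star(X,S) \inde S$, so $g_\star \in \GG_\inde$ and $\nu(g_\star) = \bar\nu$.

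Next, I would evaluate the excess risk. The orthogonality of $Y-\eta_S(X)$ with any function of $(X,S)$ gives the standard decomposition
\[
\RR(g) - \RR^\star \;=\; \esp\!\left[\bigl(g(X,S)-\eta_S(X)\bigr)^2\right]
\]
for any regressor $g$. Applied to $g_\star$ and conditioned on $S=s$, this yields
\[
\RR(g_\star)-\RR^\star \;=\; \sum_{s=1}^k \pi_s\, \esp\!\left[\bigl(\bar F^{-1}(F_s(\eta_s(X)))-\eta_s(X)\bigr)^2 \,\big|\, S=s\right] \;=\; \sum_{s=1}^k \pi_s \int_0^1\!\! |\bar F^{-1}(u)-F_s^{-1}(u)|^2\,\ud u,
\]
where the change of variables uses that $F_s(\eta_s(X))\sim\mathsf{Unif}([0,1])$ conditionally on $S=s$. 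By the quantile formula for $W_2^2$ on $\R$, this last sum equals $\sum_{s=1}^k \pi_s\, W_2^2(\mu_s,\bar\nu)$. Finally, Theorem~\ref{th:bound} applied to the PC class $\GG_\inde$ together with the reduction in \eqref{eq:optisbary}--\eqref{eq:levrai} gives
\[
\EE(\GG_\inde) \;=\; \inf_\nu \sum_{s=1}^k \pi_s\, W_2^2(\mu_s,\nu) \;=\; \sum_{s=1}^k \pi_s\, W_2^2(\mu_s,\bar\nu)
\]
by definition of the barycenter. Comparing the two displays yields both $\RR(g_\star) = \RR(\GG_\inde)$ and the announced cost-of-fairness formula.

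The main technical point is not the excess-risk computation itself but the simultaneous role of the map $\bar F^{-1}\circ F_s$: it is at once (a) the measurable map pushing $\mu_s$ onto $\bar\nu$ and (b) the monotone rearrangement realizing the $W_2^2$ distance between $\mu_s$ and $\bar\nu$. Once these classical one-dimensional facts are invoked, together with the lower bound already established in Theorem~\ref{th:bound}, everything collapses to a short computation, and no further multimarginal machinery is needed in the proof proper.
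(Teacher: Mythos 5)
Your proof is correct, and it reaches the same destination as the paper by the same overall strategy --- invoke the equality case of Theorem~\ref{th:bound} for the PC class $\GG_\inde$, reduce to the barycenter problem via \eqref{eq:optisbary}--\eqref{eq:levrai}, and then exhibit a regressor whose conditional law given $S=s$ is $\bar\nu$ and whose coupling with $\eta_s(X)$ is $W_2$-optimal. The difference is in how that last step is carried out. The paper routes through the multimarginal formulation \eqref{eq:multimarginal}: it takes the optimal multimarginal coupling $\gamma_\star$ (the comonotone coupling $(F_1^{-1}(U),\dots,F_k^{-1}(U))$), uses \eqref{eq:mingamma} to identify $\sum_s\pi_s\int|z_s-b(\mathbf z)|^2\ud\gamma_\star$ with the barycenter value, and expresses $g_\star$ through the transport maps $T_s^t=T^t\circ T_s$ furnished by Theorem~4.1 of \cite{agueh_barycenters_2011}. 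You bypass all of that and work directly with one-dimensional quantile identities: $\bar F^{-1}=\sum_t\pi_t F_t^{-1}$ is the quantile function of the barycenter, $g_\star(X,S)=\bar F^{-1}(F_S(\eta_S(X)))$ is $\bar\nu$-distributed conditionally on every $S=s$ (hence DP-fair), and the map $\bar F^{-1}\circ F_s$ is simultaneously the monotone rearrangement realizing $W_2^2(\mu_s,\bar\nu)$, so the risk computation collapses to $\int_0^1|\bar F^{-1}(u)-F_s^{-1}(u)|^2\ud u$. This is more elementary and self-contained in dimension one, at the cost of not exposing the multimarginal structure that the paper reuses later (Remark~\ref{rem:noisygstar} and the estimation algorithm of Section~\ref{s:estimation} are phrased in terms of $\gamma_\star$ and its conditionals, precisely so that the construction survives when the $\mu_s$ are not absolutely continuous). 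One small point worth making explicit in your change of variables: besides $F_s(\eta_s(X))\sim\mathsf{Unif}([0,1])$, you also use that $F_s^{-1}\circ F_s=\mathrm{id}$ holds $\mu_s$-almost everywhere, which is where the density assumption enters a second time.
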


This theorem provides an insight on how to build a fair procedure which preserves as much as possible of the prediction accuracy of the Bayes regressor. 
Recall that $X$ represents the characteristics of the individuals that are used to predict $Y$ and $S \in[k]$ represents the community or category to which they belong.
The conditional expectation $\eta_S(X)$ is the optimal regressor but the variable $Y$ is unfairly predicted by $\eta_S(X)$ in the sense that the distribution $\eta_S(X)$ depends on the sensitive variable $S$ --- thus violating the property of demographic parity. 
Then the solution of the multi-marginal problem $\gamma_{\star}$ gives a coupling between the random variables $\eta_1(X), \ldots, \eta_k(X)$. Since we assume that $\mu_s$ has a density for every $s \in [k]$, this coupling can be understood using transport maps as follows. 

For each individual $x \in \X$ in the category $S=s$ the Bayes regressor predicts $\eta_s(x)$. This prediction is then mapped to a prediction $\tilde \eta_t(x)$ for every other category $t\neq s$ using $\tilde \eta_t(x)=T^t_s(\eta_s(x))$. If the Bayes regressor was close to DP-fair then this prediction would not depend on $t$ and hence, we would have $\tilde \eta_t(x)\approx \eta_s(x)$ for all $t \neq s$. In general, the Bayes regressor may not be fair. Instead, the optimal DP-fair regressor $g_\star$ replaces the prediction $\eta_s(x)$ with the weighted average of all predictions $\tilde \eta_t(x), t \in [k]$ that are paired up with $\eta_s(x)$ using the multimarginal coupling.


Note that akin to the Bayes regressor, the DP-fair optimal regressor is the theoretical fair regressor since it depends on the unknown quantities $\eta_S$ and $\pi_S$. In practice, these have to be estimated.
\begin{figure}[htbp]
		\centering
		\begin{overpic}[scale=.5,unit=1mm]{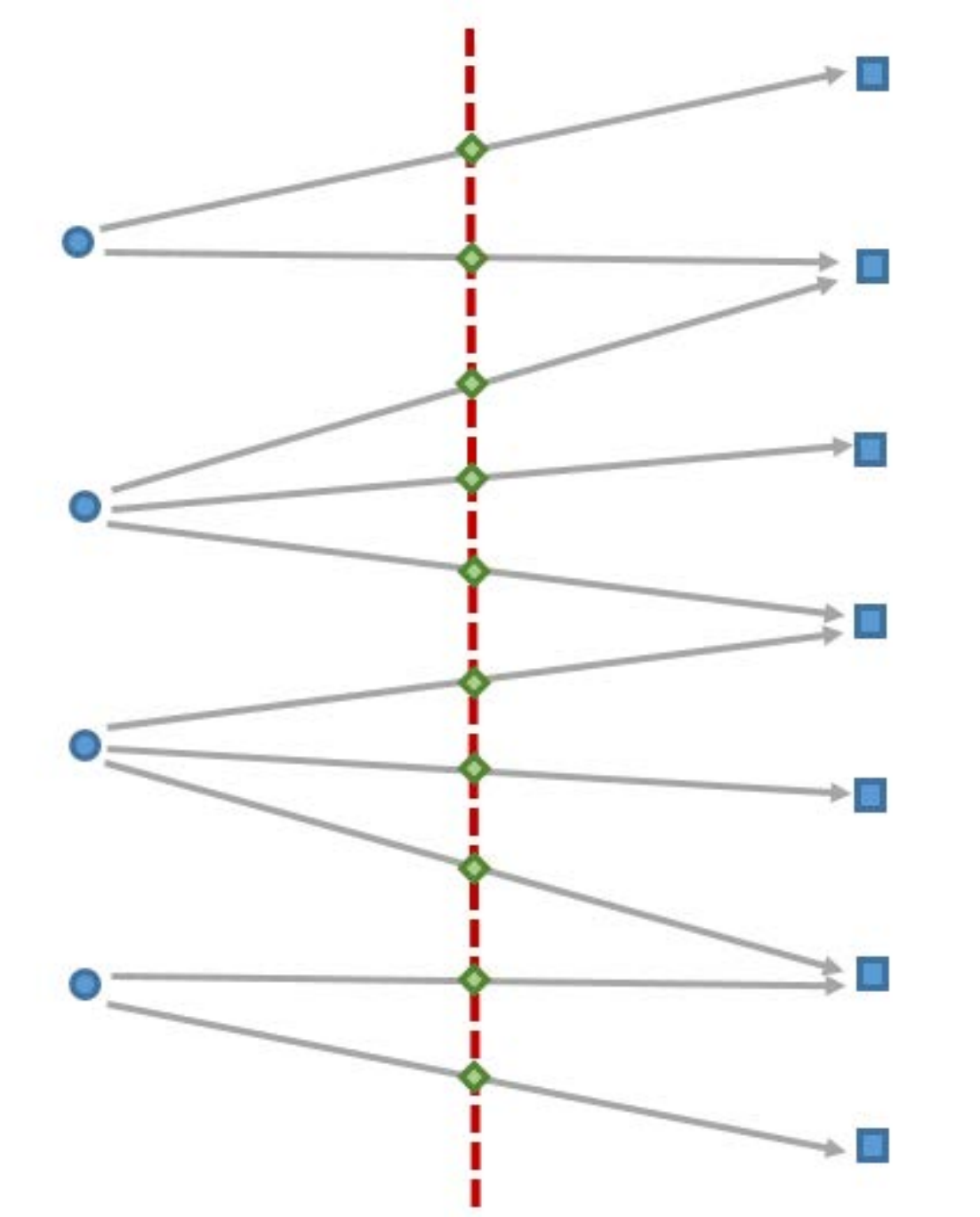}
			\put (-2,95){$S=0$}
			\put (64,98){$S=1$}
			\put (-12,78){$\eta_0(x_{1})$}
			\put (-12,57){$\eta_0(x_{2})$}
			\put (-12,37){$\eta_0(x_{3})$}
			\put (-12,17){$\eta_0(x_{4})$}
			\put (74,92){$\eta_1(x_{1})$}
			\put (74,76){$\eta_1(x_{2})$}
			\put (74,62){$\eta_1(x_{3})$}
			\put (74,48){$\eta_1(x_{4})$}
			\put (74,33){$\eta_1(x_{5})$}
			\put (74,18){$\eta_1(x_{6})$}
			\put (74,4){$\eta_1(x_{7})$}
		\end{overpic}
		\caption{Example of construction of a fair regressor as in Remark \ref{rem:noisygstar}. For $S=0$ and $i=1,\dots,4$, the values $\eta_0(x_i)$ of the unfair regressor represented by blue dots on the left are matched with values $\eta_1(x_j)$ of the unfair regressor for $S=1$. For each $i=1,\dots,4$, the optimal fair regressor for $X=x_i$ and $S=0$ is drawn among the green dots on the arrows emanating from $\eta_0(x_i)$.}
		\end{figure}

Shortly after the first version of this work was posted,~\cite{chzhenFairRegressionWasserstein2020} independently achieved similar results. Their result focuses on demographic parity for which they also use the multimarginal formulation of Wasserstein barycenters in order to make the Bayes regressor fair. 

\begin{rem}\label{rem:noisygstar}
A similar result holds when the distributions $\mu_s$ are not all absolutely continuous w.r.t. the Lebesgue measure.
In the case, the optimal regressor $g_\star$ can be randomized as follows.
Let $\ZZ=(Z_1, \ldots, Z_k) \sim \gamma_\star$ and for each $s \in [k]$, denote  by $\gamma^{(s)}_\star(Z_s)$ the conditional distribution of $\ZZ$ given $Z_s$.
Then, for $S=s$, the optimal DP-fair regressor $g_\star$ is given by  $g_\star(X,S)=b(\ZZ^S)$, where $\ZZ^s\sim \gamma^{(s)}_\star(\eta_s(X))$.
\end{rem}

\section{Estimation of optimal fair regressor}\label{s:estimation}

In practice, the exact distribution of $(X,Y,S)$ that is required to compute the fair regressor $g_\star$ is unknown.
Building on the previous section, we provide some guidelines  to construct an estimator of $g_\star$ using simple plug-in rules.

\subsection{General estimator of optimal fair regressor}
Theorem~\ref{th:optpred} provides a way to construct an optimal fair regressor $g_\star$ for the regression case.
Yet, it depends on unknown quantities, namely, the probability mass function $s \mapsto \pi_s$ of $S$, the transport maps $T_s$ and $T^t$ and $\eta_s(X)$ for each $s \in[k]$. 

In a statistical setup, rather than having access to these quantities, we observe independent copies $(Y_1,X_1,S_1), \ldots, (Y_n, X_n, S_n)$ of $(Y,X,S)$. We also assume a that we are in a  transductive learning setup were we have access to a collection of additional unlabeled features $(X_{n+1},S_{n+1}),\ldots, (X_{n+m}, S_{n+m})$ on which we aim to make predictions. 

For each $s \in [k]$, let $N_s=\#\{i\in\{1,\dots,n+m\}|S_i=s\}$ denote the number of observations in class $S$.

We propose the following estimator of $g_\star$. \vskip .1in
\noindent {\bf Algorithm to construct a fair regressor}
\begin{enumerate}
\item Estimate $x \mapsto \eta_s(x)$ for each $s\in[k]$ by an estimator of the conditional expectation  $\hat{\eta}_s(x)$ using  observations $(Y_i,X_i,S_i), i=1, \ldots, n$ . This can be achieved using parametric or non-paramteric regression techniques.
\item Approximate the distribution of each $\hat\eta_s(X)$ for $s=1,\dots,k$ by the empirical measure using the whole dataset $(X_i,S_i)_{i=1,\dots,n+m}$:
\[
\hat\mu_s:=\frac{1}{N_s} \sum_{i=1}^{N_s} \delta_{\hat{\eta}_s(X_i)}\mathbf{1}_{\{S_i=s\}}.
\]
\item Set 
\[
\hat \pi_s=\frac{N_s}{n+m} \qquad \text{and} \qquad \hat b:=(y_1,\dots,y_k)\mapsto \sum_{s=1}^k\hat \pi_s y_s
\]
and solve the multimarginal problem \eqref{eq:multimarginal} for distributions $\hat{\mu}_s$, which can be written as 
\begin{equation}\label{eq:multimarginal2}
\min_{\gamma\in\Gamma(\hat{\mu}_1,\dots,\hat{\mu}_k)}\left\{ \sum_{s=1}^k\hat\pi_s\int |y_s-\hat{b}(\mathbf{y})|^2\ud\gamma(y_1,\dots,y_k)\right\} ,
\end{equation}

Denote by $\hat\gamma$ its solution.
\item As in Remark \ref{rem:noisygstar}, denote by $\hat\gamma^{(s)}$ the conditional distribution of $\hat \gamma$ given its $s$-th coordinate.
For each $s\in[k], i \in [n+m]$ such that $S_i=s$, draw $\hat  \ZZ^s \sim \hat\gamma^{(s)}(\hat\eta_{s}(X_i))$ and set
\[
\hat g(X_i,S_i):=\hat b(\hat \ZZ^s).
\]
\end{enumerate}

This procedure provides an approximation of the fair regressor $g_\star$.
Note that $(x,s)\mapsto \hat g(x,s)$ is well defined only when $(x,s)$ lies in the predetermined set $(X_i,S_i)_{1\le i \le n+m}$ for which we want a fair regressor. Hence the whole procedure consists in post-processing the output of the algorithm for the different values of $S$ and aggregate them. \vskip .1in

In our estimation framework, the fairness profile of $\hat \eta$ is unknown as it depends on the unknown distribution of $(X,S)$. However, it can be estimated by \emph{empirical fairness profile} given by
\[
\widehat{\cP}_{n,m}(\hat \eta):=\left(\frac{1}{N_s}\sum_{i=1}^{n+m}\mathbf{1}_{\{S_i=s\}}\delta_{\hat \eta_{S_i}(X_i)}\right)_{s\in [k]}.
\]
By construction, the distribution of the empirically fair regressor $\hat g$ is the Wasserstein barycenter of the distributions in the fairness profile of $\hat\eta$, weighted by the empirical occurrences $\hat\pi_s$.
Therefore, if the empirical fairness profile of $\hat\eta$ converges to the fairness profile of $\eta$, using consistency of the barycenter (see \cite[Theorem 2]{le_gouic_existence_2017}), the distribution of $\hat g$ converges to the distribution of $g_\star$.
The following theorem describes such a situation.
\begin{thm}
\label{th:stability}
Suppose that $\hat\eta_s$ is a bounded $L_2$-consistent estimator of $\eta_s$ for each $s\in[k]$ and that $\hat\eta_s$ are uniformly Lipschitz for each $s\in[k]$.
Assume also that the distribution $\nu(g_\star)$ of $g_\star(X,S)$ is unique.
Then, as $n\to \infty$, the empirical distribution of $g(X_i,S_i)$ converges to $\nu(g_\star)$ in $W_2$, a.s..
\end{thm}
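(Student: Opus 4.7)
The strategy is to propagate convergence through the three layers that define $\hat g$: first at the level of the empirical inputs $(\hat\pi_s, \hat\mu_s)$, then through the Wasserstein barycenter map, and finally through the randomized post-sampling step. I would begin with the population-level inputs. By the strong law of large numbers applied to $\mathbf{1}_{\{S_i=s\}}$, $\hat\pi_s=N_s/(n+m)$ converges a.s.\ to $\pi_s$. For $W_2(\hat\mu_s,\mu_s)\to 0$ a.s., I would introduce the oracle empirical measure $\tilde\mu_s := \frac{1}{N_s}\sum_{i:S_i=s}\delta_{\eta_s(X_i)}$ and split via the triangle inequality. The term $W_2(\tilde\mu_s,\mu_s)$ vanishes by the standard $W_2$-consistency of empirical measures of i.i.d.\ samples (applicable here since boundedness of $\hat\eta_s$ combined with $L_2$-consistency forces $\eta_s$ to be essentially bounded under the law of $X\mid S=s$). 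The other term admits the synchronous coupling bound $W_2^2(\hat\mu_s,\tilde\mu_s) \leq \frac{1}{N_s}\sum_{i:S_i=s}|\hat\eta_s(X_i)-\eta_s(X_i)|^2$, and I would argue that this empirical squared error tends to zero a.s.\ by combining $L_2$-consistency of $\hat\eta_s$ with a uniform law of large numbers over the Arzelà--Ascoli-compact class of uniformly bounded, uniformly Lipschitz functions containing $\hat\eta_s$.

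With $\hat\mu_s\to\mu_s$ in $W_2$ and $\hat\pi_s\to\pi_s$, the stability result for Wasserstein barycenters \cite[Theorem 2]{le_gouic_existence_2017}, together with the uniqueness of $\bar\nu=\nu(g_\star)$, yields $\hat\nu := \hat b_\# \hat\gamma \to \bar\nu$ in $W_2$ almost surely. It then remains to relate the empirical distribution $\hat\nu_{\mathrm{emp}}:=\frac{1}{n+m}\sum_{i=1}^{n+m}\delta_{\hat g(X_i,S_i)}$ to $\hat\nu$. The key observation is that, conditionally on the data and on $\hat\eta$, the values $\hat g(X_i,S_i)=\hat b(\hat\ZZ^{S_i})$ are independent, and the average of their conditional laws equals $\hat\nu$ exactly by the identity $\int \hat\gamma^{(s)}(y)\, \ud\hat\mu_s(y)=\hat\gamma$. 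A conditional Glivenko--Cantelli-type argument in $W_2$---valid because all random variables involved take values in a bounded interval---then yields $W_2(\hat\nu_{\mathrm{emp}},\hat\nu)\to 0$ a.s., and the triangle inequality concludes.

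The main obstacle I anticipate is the transfer of population $L_2$-consistency of $\hat\eta_s$ (typically stated on a fresh test point independent of the training sample) to the empirical squared error $\frac{1}{N_s}\sum_{i:S_i=s}|\hat\eta_s(X_i)-\eta_s(X_i)|^2$ evaluated on data that may overlap with the sample used to fit $\hat\eta_s$. The uniform Lipschitz and uniform boundedness hypotheses are precisely what enable this transfer: they enclose each $\hat\eta_s$ in an Arzelà--Ascoli precompact class over which the empirical and population $L_2$ errors can be made to agree uniformly in the limit via a standard bracketing or covering argument.
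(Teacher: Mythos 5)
Your proposal is correct and follows the same high-level skeleton as the paper's proof (law of large numbers for $\hat\pi_s$, $W_2$-consistency of $\hat\mu_s$, then the barycenter stability theorem of \cite[Theorem 2]{le_gouic_existence_2017}), but you handle the crux --- showing $W_2(\hat\mu_s,\mu_s)\to 0$ --- by a genuinely different decomposition. The paper takes the optimal transport map $T^{n,m}$ from the population law $\P_{(X,S)}$ to the empirical measure $\P_{(X,S)}^{n+m}$ and bounds the displacement of $\hat\eta$ along it, splitting into three terms: the population $L_2$ error $\esp|\eta_S(X)-\hat\eta_S(X)|^2$ (killed directly by the consistency hypothesis), a term $2L^2\esp|X-T_x(X,S)|^2$ controlled by the Lipschitz constant and the $W_2$-consistency of the empirical measure of $(X,S)$, and a term $8M\,\P(T_s(X,S)\ne S)$ controlled by boundedness and the fact that switching the discrete coordinate costs at least $1$. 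This is a clean trick precisely because it never needs to evaluate the $L_2$ error of $\hat\eta_s$ \emph{on the training points}: everything stays at the population level and is then transported. Your route instead passes through the oracle empirical measure $\tilde\mu_s$ and the synchronous coupling, which forces you to confront the empirical squared error $\frac{1}{N_s}\sum_{i:S_i=s}|\hat\eta_s(X_i)-\eta_s(X_i)|^2$ on data overlapping the training sample; you resolve this with a uniform law of large numbers over the sup-norm totally bounded (Arzel\`a--Ascoli) class of $M$-bounded, $L$-Lipschitz functions. Both arguments use the Lipschitz and boundedness hypotheses in essential but different ways, and both implicitly require $\X$ to be compact (the paper for Varadarajan's theorem in $W_2$, you for Arzel\`a--Ascoli), so neither is more demanding than the other. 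A point in your favor: you explicitly close the gap between the barycenter measure $\hat b_\#\hat\gamma$ and the \emph{empirical distribution of the randomized outputs} $\hat g(X_i,S_i)$, using the disintegration identity $\frac{1}{N_s}\sum_{i:S_i=s}\hat\gamma^{(s)}(\hat\eta_s(X_i))=\hat\gamma$ and a conditional Glivenko--Cantelli argument for bounded independent draws; the paper's proof stops at convergence of the barycenter itself and leaves this last step implicit, even though it is what the theorem actually asserts.
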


This approximation requires estimators of the Bayes predictor that can be found for instance in \cite{gyorfiDistributionfreeTheoryNonparametric2006, Tsy09}. 
There exist several algorithms to compute a multimarginal solution $\hat\gamma$ of \eqref{eq:multimarginal2} (see \cite{cuturiSinkhornDistancesLightspeed2013,cuturiFastComputationWasserstein2014,solomonConvolutionalWassersteinDistances2015, guminov2019accelerated,pmlr-v97-kroshnin19a,chewiGradientDescentAlgorithms2020a,lin2020fixedsupport,Dvi20,altschulerHighprecisionWassersteinBarycenters2020}). \vskip .1in

\begin{rem}
Recall from  Theorem~\ref{th:optpred}, that the optimal regressor that respects demographic parity is given by
\[
g_\star(x,s)=\sum_{t=1}^k\pi_t F_{t}^{-1}\circ F_{s}(\eta_s(x)).
\]
where $F_s$ denotes the the CDF of $\eta_s(X)$ and $F_S^{-1}$ denotes its quantile function. There is a vast literature that provides estimators $\widehat{F_s}$ and $\widehat{F_s^{-1}}$ for these functions. Using naive plug-in estimators yields the following estimator of $g_\star$
\[
\hat{g}(X,S) =\sum_{t=1}^k \hat{\pi}_j \widehat{F_s^{-1}}\circ \widehat{F_s}(\hat\eta_s(x)).
\]
\end{rem}

Hence, we have achieved the  construction of  a \emph{fair regressor} by considering the empirical barycenter of the approximated empirical distributions of the Bayes regressor.
So the procedure is thus a post-processing method that combines the contribution of  matching individuals with different sensitive attributes. \vskip .1in

Note that~\cite{chzhenFairRegressionWasserstein2020} also achieve finite sample results for similar plug-in estimators.

\section{Conclusion}
We have computed the exact loss of efficiency --- i.e the cost for fairness --- in the quadratic regression framework.
Our result shows that post-processing through optimal transport provides the optimal way to achieve demographic parity in this model.
Yet, some questions of interest remain.
As Theorem \ref{th:bound} strongly suggests, post-processing through optimal transport should also give an optimal regressor for fairness notions that can be expressed via PC classes.
What is the right corresponding multi-marginal problem in these cases of fairness, and how to solve them?
On a more technical note, Theorem \ref{th:stability} only expresses consistency of our estimator of the optimal predictor. What are the rates of convergence for this estimator?
      
\appendix  
\section{Proofs} \label{s:append}
\begin{proof}[Proof of Theorem~\ref{th:bound}]
Denote $\eta_S(X):=\esp [Y|(X,S)]$ and recall from Pythagoras' theorem that 
\begin{equation}
\label{eq:risk-est}
\esp|Y-\eta_S(X)|^2 + \esp|g(X,S)-\eta_S(X)|^2=\esp|Y-g(X,S)|^2.
\end{equation}
Therefore,
\[
\inf_{g\in\GG} \esp |Y-g(X,S)|^2    -\esp|Y-\eta_S(X)|^2=\inf_{g\in\GG}\esp \big[\esp(|g(X,S)-\eta_S(X)|^2|S)\big]
\]
For almost every value $s$ of $S$, the conditional distribution of $(g(X,S),\eta_S(X))$ given $S$ is a coupling between $\mu_s$ and $\nu_s$, hence by definition of the Wasserstein distance
\[
\esp[|g(X,S)-\eta_S(X)|^2|S] \geq \esp W_2^2(\nu_S(g),\mu_S).
\]
Integrating with respect to $S$ proves \eqref{eq:bound}.

If $\mu_s$ has density w.r.t.\! the Lebesgue measure, then there exists an optimal transport map $T_s:\R\ra\R$ such that  $T_s(\eta_s(X)) \sim \nu_s(g)$ and
\[
W_2^2(\nu_s(g),\mu_s)=\esp |T_s(\eta_s(X))-\eta_s(X)|^2.
\]
Define now $h:=(x,s)\mapsto T_s(\eta_s(x))$ and observe  $h(X,S)$ the same distribution $\nu_S(g)$ as $g(X,S)$. Hence, since $\GG$ is PC, it is also the case that $h \in \GG$. In particular it yields
$$
 \esp W_2^2(\nu_S(g),\mu_S) =  \esp |h(X,S)-\eta_s(X)|^2 \ge \inf _{g \in \GG}  \esp |g(X,S)-\eta_s(X)|^2\,.
$$
Thus \eqref{eq:bound} is an equality.

The case of noisy PC is handled similarly.
\end{proof}

\begin{proof}[Proof of Theorem~\ref{th:stability}]
Using consistency of the Wasserstein barycenter \cite[Theorem 2]{le_gouic_existence_2017}, we just need to prove that $\hat{P}_{\hat{\mu}_S}\to P_{\mu_S}$ in Wasserstein distance.
This is a consequence of $\hat\pi_s\to\pi_s$ --- which holds due to the law of large numbers, and $\hat\mu_s\to\mu_s$ in Wasserstein distance.
Therefore, it just remains to prove that, as $n+m\to\infty$,
\[
W_2^2(\hat\mu_s,\mu_s)\to0,\quad \forall s\in[k].
\]
Denote by $T^{n,m}:(x,s)\mapsto (T^{n,m}_x(x,s),T^{n,m}_s(x,s))\in\R\times\R$ the optimal transport from the distribution $\P_{(X,S)}$ of $(X,S)$ to the empirical measure 
\[
\P_{(X,S)}^{n+m}:=\frac{1}{n+m}\sum_{i=1}^{n+m}\delta_{(X_i,S_i)}.
\]
Denote by $L$ an upper bound of the Lipschitz constants of $\hat \eta_s$.
We can compute
\begin{align*}
W_2^2(\hat\mu_s,\mu_s)&\le \esp |\eta_S(X)-\hat\eta_{T_s(X,S)}(T_x(X,S))|^2\\
&\le  2\esp |\eta_S(X)-\hat\eta_{S}(X)|^2 +  2\esp \mathbf{1}_{T_s(X,S)=S}|\hat\eta_{S}(X)-\hat\eta_{S}(T_x(X,S))|^2\\
&\quad\, +2\esp\mathbf{1}_{T_s(X,S)\ne S}|\hat\eta_{S}(X)-\hat\eta_{T_s(X,S)}(T_x(X,S))|^2\\
&\le  2\esp |\eta_S(X)-\hat\eta_{S}(X)|^2+ 2L^2\esp |X-T_x(X,S)|^2 + 8M\P(T_s(X,S)\ne S),
\end{align*}
where $M$ is the bound of $\hat \eta_s$.
Now, $\esp |\eta_S(X)-\hat\eta_{S}(X)|^2\to0$ as $\hat\eta_s$ is $L_2$-consistent.
Then, 
\[
\esp |X-T_x(X,S)|^2=W_2^2(\P_{(X,S)},\P_{(X,S)}^{n+m})\to 0
\]
since the empirical measure is consistent in $W_2$ (this is a consequence of Varadarajan's Theorem on compact spaces, see for instance \cite{boissardMeanSpeedConvergence2014,weedSharpAsymptoticFinitesample2019} for rates of convergence).
Finally, since $|s-s'|\ge1$ for $s\ne s'\in[k]$, then, \[ \P(T_s(X,S)\ne S)\le  W_2^2(\P_{(X,S)},\P_{(X,S)}^{n+m})\to 0 \] a.s., which concludes the proof.
\end{proof}

\textbf{Acknowledgments}. \\
We thank Sinho Chewi for comments on an earlier version of this manuscript.
Thibaut Le Gouic was supported by ONR grant N00014-17-1-2147 and NSF IIS-1838071.
Jean-Michel Loubes thanks the AI interdisciplinary institute ANITI, grant agreement ANR-19-PI3A-0004 under the French investing for the future PIA3 program.
Philippe Rigollet was supported by NSF awards IIS-1838071, DMS-1712596, DMS-TRIPODS-1740751, and ONR grant N00014-17- 1-2147.

\printbibliography
\end{document}